\let\NAT@parse\undefined
\pgfplotsset{compat=newest}
\newcommand{\vect}[1]{\textbf{#1}}
\newcommand{\RR}{\mathbb{R}}
\newcommand{\T}{^{\intercal}}
\newtheorem{theorem}{Theorem}
\newtheorem{definition}{Definition}
\newtheorem{assumption}{Assumption}
\newcommand{\textref}[2]{\hyperref[#1]{#2~\ref*{#1}}}
\title{\LARGE \bf
Multi-Step Model Predictive Safety Filters: Reducing Chattering by Increasing the Prediction Horizon}
\author{Federico Pizarro Bejarano, Lukas Brunke, and Angela P. Schoellig% <-this % stops a space
\thanks{The authors are with the Learning Systems and Robotics Lab (\url{www.learnsyslab.org}), University of Toronto, Canada, and affiliated with the University of Toronto Robotics Institute and the Vector Institute for Artificial Intelligence in Toronto. Lukas Brunke and Angela P. Schoellig are also with the Technical University of Munich and the Munich Institute for Robotics and Machine Intelligence~(MIRMI), Germany. E-mails: \{federico.pizarrobejarano, lukas.brunke, angela.schoellig\}@robotics.utias.utoronto.ca}}
\begin{document}

\maketitle
\thispagestyle{empty}
\pagestyle{empty}

%%%%%%%%%%%%%%%%%%%%%%%%%%%%%%%%%%%%%%%%%%%%%%%%%%%%%%%%%%%%%%%%%%%%%%%%%%%%%%
\begin{abstract}
Learning-based controllers have demonstrated superior performance compared to classical controllers in various tasks. However, providing safety guarantees is not trivial. Safety, the satisfaction of state and input constraints, can be guaranteed by augmenting the learned control policy with a safety filter. Model predictive safety filters~(MPSFs) are a common safety filtering approach based on model predictive control~(MPC). MPSFs seek to guarantee safety while minimizing the difference between the proposed and applied inputs in the immediate next time step. This limited foresight can lead to jerky motions and undesired oscillations close to constraint boundaries, known as chattering. In this paper, we reduce chattering by considering input corrections over a longer horizon. Under the assumption of bounded model uncertainties, we prove recursive feasibility using techniques from robust MPC. We verified the proposed approach in both extensive simulation and quadrotor experiments. In experiments with a Crazyflie 2.0 drone, we show that, in addition to preserving the desired safety guarantees, the proposed MPSF reduces chattering by more than a factor of 4 compared to previous MPSF formulations. 
\end{abstract}

%%%%%%%%%%%%%%%%%%%%%%%%%%%%%%%%%%%%%%%%%%%%%%%%%%%%%%%%%%%%%%%%%%%%%%%%%%%%%%
\section{Introduction}
Robotic systems are increasingly deployed to perform tasks in complex environments, including autonomous driving~\cite{zeus}, aerial delivery~\cite{drone_delivery}, and surgery~\cite{surgery}. These tasks often suffer from significant uncertainties in the system dynamics, rendering classical controllers less effective~\cite{brunke_safe_2021}. Learning-based controllers have demonstrated improved performance on complex tasks~\cite{games, silver-nature-2016, openai-2019}.
% , and have the benefit of directly  making use of high-dimensional and varied sources of measurements, such as images~\cite{end_to_end_rl}
However, reinforcement learning controllers typically lack safety guarantees and rarely enforce state constraints such as road lane boundaries for a self-driving car~\cite{brunke_safe_2021}. This prevents reinforcement learning controllers from being deployed on safety-critical systems despite promising results and demonstrations. Combinations of machine learning and model-based control~(notably learning-based MPC) have become popular methods of leveraging the benefits of both approaches. However, these approaches do not have the adaptability across diverse systems and tasks of model-free reinforcement learning~\cite{brunke_safe_2021}.

Safety filters allow arbitrary controllers to be implemented with safety guarantees, including deep learning controllers~\cite{brunke_safe_2021}. Model predictive safety filters~(MPSFs)~\cite{zeilinger_linear} are a category of safety filters that leverage model predictive control~(MPC) to predict whether uncertified~(i.e., potentially unsafe) inputs sent from the controller will violate the constraints. In the case of a potential future constraint violation, the MPSF determines the minimal deviation from the uncertified input that results in constraint satisfaction~(see \textref{fig:general_sf_model}{Fig.}). 

\begin{figure}[t]
  \centering
  \includegraphics[width=1.0\linewidth]{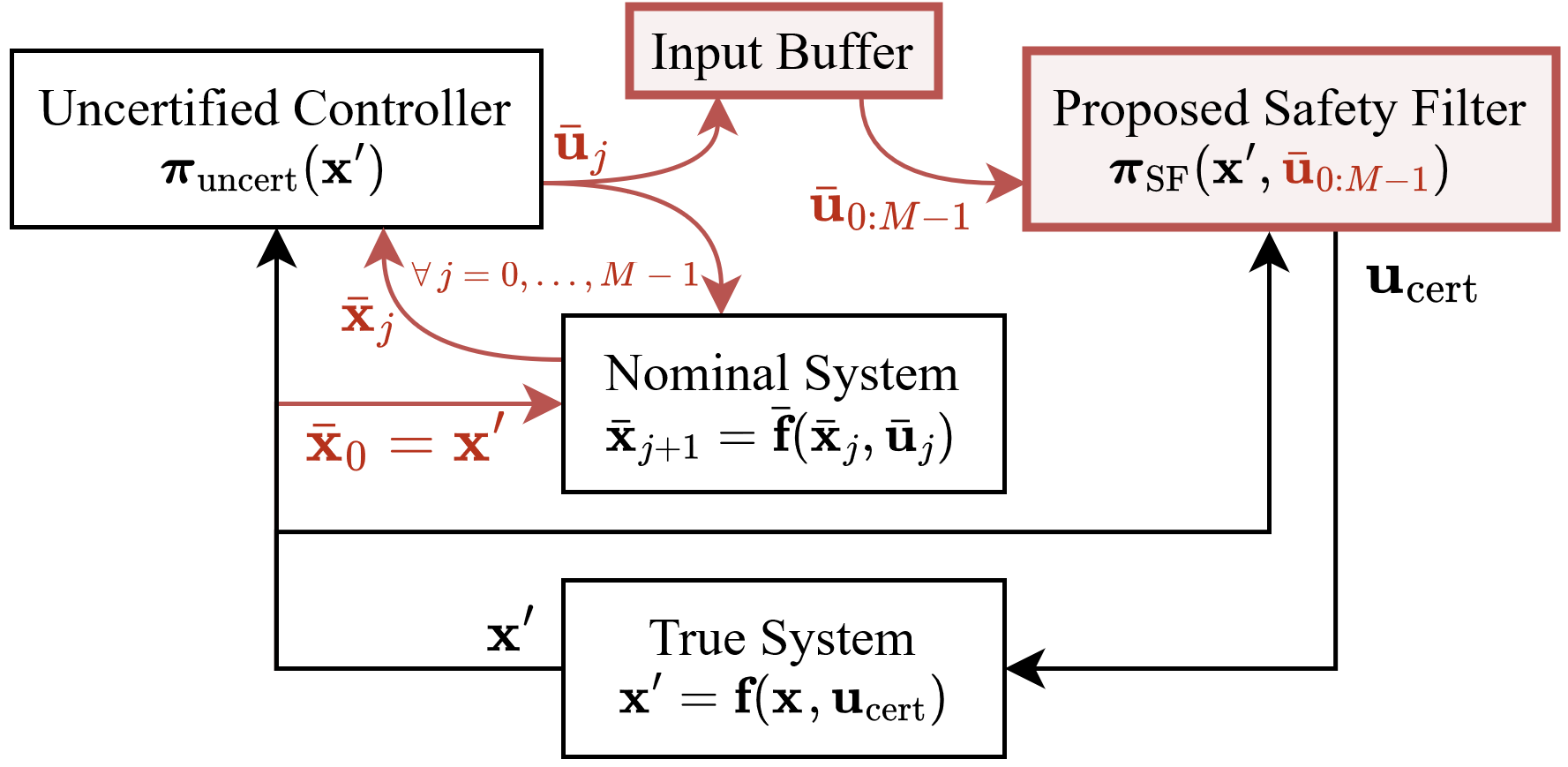}\\
  \caption{Our proposed safety filter approach~$\boldsymbol{\pi}_{\text{SF}}$ for safely controlling the system~$\vect{x}^{\prime} = \vect{f}(\vect{x}, \vect{u})$, with the novel components highlighted in red. The safety filter simulates an input trajectory for the next~$M$ time steps, using the nominal system~$\bar{\vect{f}}$ and the uncertified controller~$\boldsymbol{\pi}_{\text{uncert}}(\vect{x}^{\prime})$. The input  trajectory~$\bar{\vect{u}}_{0:M-1}$ along with the state~$\vect{x}^{\prime}$ is passed to the safety filter~$\boldsymbol{\pi}_{\text{SF}}$, which outputs a safe control input~$\vect{u}_{\text{cert}}$. 
  }
  \label{fig:general_sf_model}
  \vspace*{-1mm}
\end{figure}

\begin{figure}[t]
  \centering
  \includegraphics[width=1.0\linewidth]{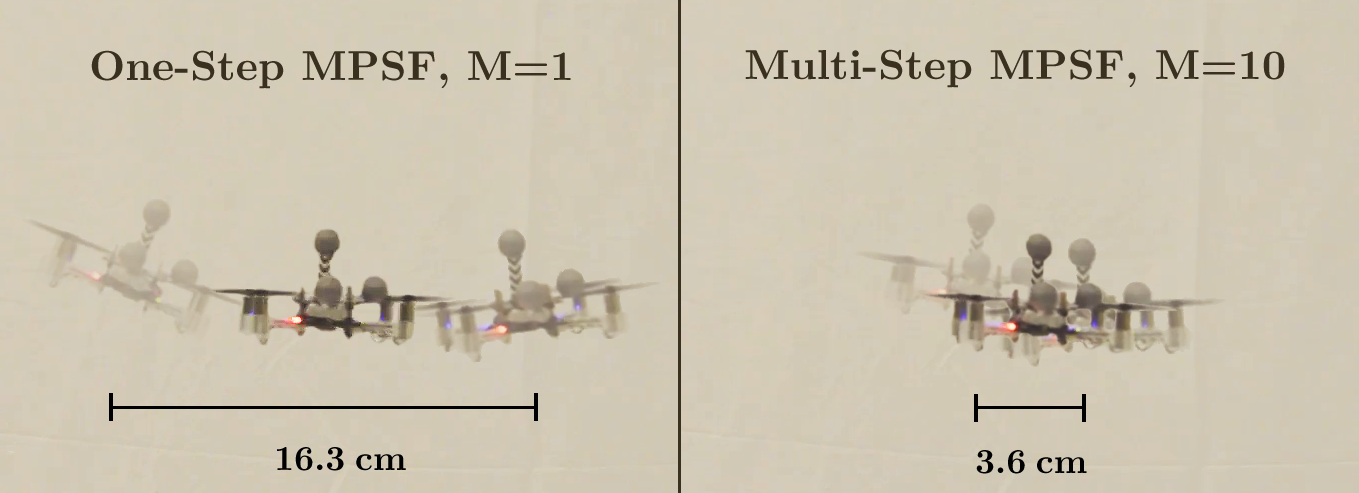}\\
  \caption{Experimental results on a Crazyflie 2.0 drone illustrating the chattering caused by the standard one-step MPSF versus the proposed multi-step MPSF with a filtering horizon of $M=10$. The proposed filter reduces the peak-to-peak amplitude of chattering near the constraint boundary from an average of \SI{16.3}{\centi\meter} to \SI{3.6}{\centi\meter} (78\% reduction).}
  \label{fig:drone_chat}
  \vspace*{-3mm}
\end{figure}

Despite strong theoretical guarantees inherited from the underlying MPC \cite{zeilinger_linear}, MPSFs may cause chattering and high-magnitude corrections to the uncertified control inputs~(see \textref{fig:drone_chat}{Fig.}). Chattering occurs when the controller directs the system towards a constraint boundary and is repeatedly stopped by the safety filter. This leads to an oscillatory behavior between the controller and the safety filter~\cite{Koller_LMPC}. Additionally, due to the MPSF objective only looking ahead one time step, the filter may allow a system to approach the boundary very closely before using a high-magnitude correction to keep the system inside the state constraints. Both of these behaviors can cause the system to leave the operation regime as the high-frequency input changes may not be accounted for in the system identification, increasing the model mismatch between the actual system and the nominal model in the MPSF~\cite{brunke_safe_2021}, potentially leading to constraint violations.

\textit{Contributions}: First, we propose generalizing the standard MPSF objective function to reduce chattering and other potentially unsafe corrective actions. Our approach inherits the theoretical recursive feasibility guarantees of the underlying MPC. Second, we propose metrics that reflect the magnitude of corrections and chattering. Previously, chattering has only been qualitatively described~\cite{Koller_LMPC, zeilinger_linear, reachability_w_GP}. The efficacy of the proposed MPSF is demonstrated in simulation using the \texttt{safe-control-gym}~\cite{safe-control-gym} cartpole and quadrotor systems, and on real-world Crazyflie 2.0 quadrotors (see \textref{fig:drone_chat}{Fig.}). As far as the authors know, this is only the second paper to implement MPSFs on real robots, where~\cite{racing} designed an MPSF specific for miniature remote-controlled car racing. Our code can be found at \url{https://tinyurl.com/mpsf-code}, and a video of our experiment can be found at \url{https://tinyurl.com/mpsf-video}.

%%%%%%%%%%%%%%%%%%%%%%%%%%%%%%%%%%%%%%%%%%%%%%%%%%%%%%%%%%%%%%%%%%%%%%%%%%%%%%
\section{Related Work}
As learning-based controllers have grown in popularity, leveraging their strengths while also guaranteeing constraint satisfaction has become increasingly important. Many approaches for incorporating safety into model-free learning controllers serve to encourage safe behavior, reducing the probability of constraint violations. However, they do not provide hard guarantees on constraint satisfaction and are not generalizable to all learning controllers~\cite{brunke_safe_2021, ostafew}. 

Safety filters are necessary to guarantee the safety of an arbitrary controller. This is done by computing the minimal deviation from the latest uncertified control input that results in constraint satisfaction. 
% Safety filters can be categorized into explicit and implicit safety filters.

% Explicit safety filters determine a safe set and keep the system within that safe set. 
Control barrier functions (CBFs) are frequently used in safety filtering frameworks~\cite{brunke_safe_2021}. The CBFs define safe sets as their super-level set~\cite{cbfs} and solve an optimization problem for each proposed control input to certify whether it is safe, and modify it otherwise. CBFs are usually defined for continuous-time systems~\cite{brunke_safe_2021}, which poses challenges when running discrete-time controllers. Finding CBFs can be difficult and requires a model of the system dynamics or offline data; analytical solutions can typically only be found for special cases~\cite{cbfs}. However, various methods for learning the CBF condition have recently been proposed~\cite{expert_cbf, lukas_bblr, black_box_cbf}. Another approach to certification using a safe set determines the set using Hamilton-Jacobi reachability analysis~\cite{HJRA} and switches between the original controller and a safe controller to keep the system inside the safe set~\cite{reachability_w_GP}. The CBF- and reachability-based safety filters compute safe inputs quickly and have strong theoretical guarantees, but determining an appropriate safe set is challenging, and accounting for constraints that change during operation is typically computationally infeasible~\cite{brunke_safe_2021}. 

% Implicit safety filters do not require an explicit safe set. 
Model predictive safety filters (MPSFs) 
% implicitly determine a safe set 
determine certified control inputs
using model predictive control~\cite{zeilinger_linear}. At each iteration, the MPSF receives a potentially unsafe input and determines the state of the system after this input is applied using a model of the system dynamics. Then, from that future state, the MPSF uses the nominal model of the system to generate a trajectory to a safe terminal set in a number of steps up to the horizon of the MPC. This trajectory must be safe despite uncertainties, which is guaranteed using a known bound on the mismatch between the true system dynamics and the nominal model. If there is no feasible trajectory that will take the system from this future state to the terminal set, it will modify the original input so that this is possible.

MPSFs reduce offline computation and complex analytical work necessary for determining a safe set in favor of increased online computation~\cite{zeilinger_linear}. Since MPSFs are based on well-established MPC frameworks, existing optimization techniques allow MPSFs to be executed in real-time~\cite{boyd_efficient_mpc, qp_nl_mpc}. This makes them more suitable for complex systems in high dimensions, for which determining safety for all initial conditions
% ~(to calculate an explicit safe set) 
is computationally infeasible, but for which safety can be guaranteed for a specific initial condition. 

MPSFs, as well as other safety filters such as CBFs and reachability-based filters, can cause chattering~\cite{Koller_LMPC, zeilinger_linear}. This has only been partially addressed for MPSFs by penalizing the rate of change of the inputs~\cite{racing}, a common MPC approach for achieving smooth state and input trajectories~\cite{mpc_textbook}. However, this approach does not directly penalize corrections but reduces the rate of change of the inputs over the entire trajectory, even during safe actions, and thus is not generally suitable. Reducing chattering is the main goal of this paper. 

% Previous to this paper, MPSFs had only been implemented on real robots in~\cite{racing}, on a miniature remote-controlled car racing on a fixed-width track using an MPSF designed specifically for that system.

%%%%%%%%%%%%%%%%%%%%%%%%%%%%%%%%%%%%%%%%%%%%%%%%%%%%%%%%%%%%%%%%%%%%%%%%%%%%%%
\section{Problem Formulation}
We consider a discrete, time-invariant system
\begin{align}
\label{eq:true-system}
    \vect{x}_{k+1} = \vect{f}(\vect{x}_{k}, \vect{u}_k),
\end{align}
where $\vect{x}_k \in \RR^n$ is the state at time step $k$, $\vect{u}_k \in \RR^m$ is the input, 
and $\vect{f}$ describes the dynamic behavior. The system is subject to known state and input constraints, $\vect{x} \in \mathbb{X}_{\text{c}}$ and 
$\vect{u} \in \mathbb{U}_{\text{c}}$, where $\mathbb{X}_{\text{c}} \subset \RR^n$ is closed and $\mathbb{U}_{\text{c}} \subset \RR^m$ is compact.

We assume that we only have access to a nominal model $\bar{\vect{f}}$ and uncertainty set $\mathbb{W}$ such that 
\begin{align}
\label{eq:nominal-system}
    \vect{x}_{k+1} = \bar{\vect{f}}(\vect{x}_{k}, \vect{u}_k) + \vect{w}(\vect{x}_k, \vect{u}_k) \,,
\end{align}
where $\vect{w}(\vect{x}, \vect{u}) \in \mathbb{W} \subset \RR^n \,, \forall \vect{x} \in \mathbb{X}_{\text{c}}\,, \forall \vect{u} \in \mathbb{U}_{\text{c}}$, and $\mathbb{W}$ is compact.  

A control policy $\pi_{\text{uncert}}(\vect{x}_k)$, which may not enforce constraints, is used to achieve a task~(e.g., trajectory tracking). A safety filter, which has access to the nominal model $\bar{\vect{f}}$, takes in the input $\vect{u}_{\text{uncert}, k} = \pi_{\text{uncert}}(\vect{x}_k)$ as well as the current state of the system $\vect{x}_{k}$, and uses the nominal system to determine if the proposed input is safe~(i.e., satisfies state and input constraints, and will not lead to a constraint violation in the future). If it is not safe, it will find a safe input $\vect{u}_{\text{cert}, k}$ that minimally modifies the uncertified input. This certified input is the input applied to the system, $\vect{u}_k = \vect{u}_{\text{cert}, k}$. 

The goal of this paper is to address chattering, an undesirable behavior caused by safety filters~\cite{Koller_LMPC, zeilinger_linear}, seen in \textref{fig:drone_chat}{Fig.}. We observe chattering when the input to the system oscillates between the original, unmodified input (because it is safe) and a corrected, certified input provided by the safety filter (because the original input is unsafe). Consider a controller approaching a constraint boundary certified by a safety filter. Suppose the uncertified controller is going to violate the constraints. In that case, the safety filter will intervene, providing a safe input that may move the system in a direction opposite to the controller's intention. The controller may then reapproach the boundary in the next time steps, causing the safety filter to once again correct the input.

This oscillation deteriorates the performance and, due to the rapid input changes near the boundary, can lead to a constraint violation. This is because the safety guarantees of the safety filter depend on assumptions on the nominal model of the system and on the model error bounds, which may not be accurate for high-frequency input changes that have not been accounted for during system identification~\cite{brunke_safe_2021}. Additionally, safety filters may provide high-magnitude corrections, which can also lead to the deterioration of performance and safety.

\section{Background}
Although MPSFs can be applied to any underlying MPC framework, including output feedback MPC~\cite{lukas_ouput_mpsf}, we will use robust MPC  with state feedback as the backbone of this paper. Robust MPC is a standard method of guaranteeing safety under uncertainty.

% First, let us define robust positively control invariant (RPCI) sets. 
\begin{definition}[Robust pos. control inv.~(RPCI) set~\cite{mpc_textbook}]
A set $\mathbb{P} \subseteq \mathbb{X}_{\text{c}}$ is robust positively control invariant for the system in~(\ref{eq:nominal-system}) with a controller $\pi : \RR^n \to \RR^m $ subject to state constraints $\mathbb{X}_{\text{c}}$, input constraints $\mathbb{U}_{\text{c}}$, and bounded uncertainty set $\mathbb{W}$, if the initial system state $\vect{x}_0 \in \mathbb{P}$ implies $\vect{x}_{k + 1} = \bar{\vect{f}}(\vect{x}_k, \pi(\vect{x}_k)) + \vect{w}_k \in \mathbb{P}$, $\forall \vect{w}_k \in \mathbb{W}$, $\forall k \ge 0$.
\end{definition}

Consider a robust tube-based MPC~\cite{mayne_2005, nl_mpc} performing a task on the system $\vect{f}$. Let us denote the set of possible states of the system given the known uncertainty set at time step $k$ as $\mathbb{X}_k$, and the evolution of the system as $\mathbb{X}_{k+1}  \supseteq \boldsymbol{\Phi}(\mathbb{X}_{k}, \vect{u}_{k}, \mathbb{W}) = \{\bar{\vect{f}}(\vect{x}, \vect{u}_{k}) + \vect{w}\in \RR^n \mid \forall\, \vect{x} \in \mathbb{X}_{k}, \vect{w} \in \mathbb{W}\}$. 

\begin{assumption} \label{assump:1}
    There exists a terminal set $\mathbb{X}_{\text{term}} \subset \mathbb{X}_{\text{c}}$ and a terminal controller $\pi_{\text{term}} : \RR^n \to \RR^m$ such that the following properties hold for all $\vect{x} \in \mathbb{X}_{\text{term}}$:
    \begin{align*}
        & \boldsymbol{\Phi}(\mathbb{X}_{\text{term}}, \pi_{\text{term}}(\vect{x})), \mathbb{W}) \subseteq \mathbb{X}_{\text{term}} \\
        & \pi_{\text{term}}(\vect{x}) \in \mathbb{U}_{\text{c}} \,.
    \end{align*}
\end{assumption}
\vspace*{2mm}
Thus, $\mathbb{X}_{\text{term}}$ is an RCPI set under the terminal controller $\pi_{\text{term}}$. 

Generally, MPC solves an optimization problem at each time step for an optimal input sequence over the next $H$ time steps, where $H \in \mathbb{N}$ is the horizon. At each time step $k$, MPC only applies the first input from the resulting optimal input sequence. The optimization problem for a general robust, tube-based MPC can be stated as follows:
\begin{subequations}
\begin{align}
    \min_{\vect{u}_{0|k}, ..., \vect{u}_{H-1|k}} &\sum_{j=0}^{H-1} \ell(\mathbb{X}_{j|k}, \vect{u}_{j|k}) + \ell_{\text{term}}(\mathbb{X}_{H|k}) \label{eq:nominal-mpc-objective} \\
    \text{s.t. } & \vect{x}_k \in \mathbb{X}_{0|k} \label{eq:nominal-mpc-init} \\ 
    &\mathbb{X}_{i+1|k} \supseteq \boldsymbol{\Phi}(\mathbb{X}_{i|k}, \vect{u}_{i|k}, \mathbb{W}) \\
    & \mathbb{X}_{i|k} \subseteq \mathbb{X}_{\text{c}} \,, \; \forall \; i = 0, ..., H-1\\
    & \vect{u}_{i|k} \in \mathbb{U}_{\text{c}}\\
    & \mathbb{X}_{H|k} \subset \mathbb{X}_{\text{term}} \label{eq:nominal-mpc-terminal},
\end{align}
\end{subequations}
where $\ell(\cdot)$ and $\ell_{\text{term}}(\cdot)$ are the stage and terminal cost functions, respectively, $\mathbb{X}_{i|k}$ is the set of possible states at the $(k+i)\text{-th}$ time step computed at time step $k$, and $\vect{u}_{i|k}$ is the input at the $(k+i)\text{-th}$ time step computed at time step~$k$. 

MPSFs certify a proposed input $\vect{u}_{\text{uncert}, k}$ by replacing the MPC objective function~(\ref{eq:nominal-mpc-objective}) with
\begin{align}
\label{eq:mpsf-objective-one-step}
    J_{\text{MPSF}, 1}(\vect{x}_k) = \|\pi_{\text{uncert}}(\vect{x}_k) - \vect{u}_{0|k}\|^2_{\vect{R}},
\end{align}
where $\vect{R} \in \RR^{m \times m}$ is a positive semi-definite cost matrix to weigh different input components. This objective function minimizes the difference between the next safe input $\vect{u}_{0|k}$ and the incoming uncertified input~$\vect{u}_{\text{uncert}, k}$. Therefore, the MPSF is a one-step input reference tracking MPC that tracks the input reference provided by an uncertified controller. Note that the MPSF inherits the recursive feasibility guarantees from the underlying MPC formulation with no additional assumptions required.

We will denote this standard MPSF objective function as the `one-step' objective function, as it minimizes the difference between the certified and uncertified inputs only for the upcoming time step. The remainder of the paper will deal with augmenting this objective function to improve performance and safety.

In \cite{racing} a term is added to the objective in~(\ref{eq:mpsf-objective-one-step}) to directly penalize changes in the input. This is a common MPC approach for achieving smooth state and input trajectories~\cite{mpc_textbook}. The additional penalization term is defined as follows:
\begin{align} \label{eq:j-reg}
    J_{\text{reg}, M_{\text{r}}} = \sum_{j=0}^{M_{\text{r}}-1} w_{\text{r}}(j) \|\Delta \vect{u}_{j|k}\|_{\vect{R}_{\text{r}}}^2,
\end{align}
where $w_{\text{r}}(\cdot) : \mathbb{N}_{0} \to \RR^+$ calculates the weight associated with the $j\text{-th}$ input rate, $M_{\text{r}}$ is the regularization horizon, $\Delta \vect{u}_{j|k} = \vect{u}_{j|k} - \vect{u}_{j-1|k}$, $\Delta \vect{u}_{0|k} = \vect{u}_{0|k} - \vect{u}_{0|k-1}$, and $\vect{R}_{\text{r}} \in \RR^{m \times m}$ is a cost matrix for the input rates. 

This approach directly penalizes the rate of change of the inputs, and thus it will correct the controller even when the uncertified inputs are safe, which is not the purpose of safety filters. It was used only in~\cite{racing}, which defines an MPSF specifically for controlling a remote-controlled miniature car for racing through a fixed-width track. This additional term was used due to the necessity of smooth trajectories in that specific experiment, but the term is not generally applicable. If the controller was executing high-frequency commands without violating constraints, a standard safety filter (including our proposed approach) would not interfere while the regularized approach would unnecessarily modify the control inputs to reduce the rate of change of the inputs. As the controller should not be interrupted when the constraints are inactive, such as to leverage the performance and flexibility of learning-based controllers, this regularization approach is unsuitable for most safety filtering tasks. See \textref{fig:reg_traj_sim}{Figure} for an illustration of this undesired behavior.

Another approach would be to constrain on how much the input can change, which may reflect fundamental limitations of the system or simply be used to reduce the maximum input change. However, this maximum input change would have to be carefully tuned, and would potentially limit the flexibility of the system, and the constraint would not affect the system if the input changes are smaller than this maximum value.

%%%%%%%%%%%%%%%%%%%%%%%%%%%%%%%%%%%%%%%%%%%%%%%%%%%%%%%%%%%%%%%%%%%%%%%%%%%%%%
\section{Multi-Step Model Predictive Safety Filters}
In this section, we propose our novel objective function that leverages the nominal system dynamics and predictions of the behavior of the controller.

\subsection{Filtering Objectives with Increased Horizon Lengths}
The standard one-step MPSF objective function \eqref{eq:mpsf-objective-one-step} minimizes the difference between the current uncertified input and the immediate certified input. However, it does not reduce future corrections and thus allows chattering and jerky corrections. To reduce this, we propose additionally including upcoming corrections in the MPSF objective function:
\begin{align}
    J_{\text{MPSF}, M}(\vect{x}_k) = \sum_{j=0}^{M-1} w(j)\| \pi_{\text{uncert}}(\vect{x}_{j|k}) - \vect{u}_{j|k} \|^2_{\vect{R}}, \label{eq:mpsf-extended-objective} 
\end{align}
where $w(\cdot) : \mathbb{N}_{0} \to \RR^+$ calculates the weights associated with the $j\text{-th}$ correction and $M$ is the filtering horizon. 

In practice, the weight $w(\cdot)$ and the filtering horizon $M$ are tuning parameters. The weight $w(\cdot)$ controls the impact of future proposed inputs.
% , and should be chosen to gradually decrease as the accuracy of the simulated input trajectory decreased, with $w(0) = 1$ and decreasing monotonically. 
A constant $w(\cdot)$ may be used when all future proposed control inputs are equally important, while a decreasing $w(\cdot)$ may be applied when future corrections should have less impact as $j$ increases. 
The filtering horizon $M$ determines how many time steps into the future the safety filter considers when minimizing the corrections. 
% $M$ depends on the frequency of the controller, the fidelity of the approximation of the controller (discussed in the next section), the discounting function $w(\cdot)$, and the desired behavior. 
Intuitively, larger $M$ values cause the safety filter to intervene early and proactively.
% , while smaller filtering horizons will cause the filter to be more reactive. 
Note that our proposed formulation generalizes the one-step MPSF, which is recovered by setting $M=1$.

The multi-step MPSF optimization problem is then
\begin{subequations}
\label{eq:mpsf-optimization}
\begin{align}
    \min_{\vect{u}_{0|k}, ..., \vect{u}_{H-1|k}} &\sum_{j=0}^{M-1} w(j)\| \pi_{\text{uncert}}(\vect{x}_{j|k}) - \vect{u}_{j|k} \|^2_{\vect{R}} \label{eq:mpsf-objective}\\
    \text{s.t. } & \mathrm{constraints~}\eqref{eq:nominal-mpc-init}-\eqref{eq:nominal-mpc-terminal}. \label{eq:mpsf-constraints}
\end{align}
\end{subequations}
The following theorem summarizes the proposed MPSF's recursive feasibility guarantees inherited from the underlying MPC formulation.
\begin{theorem}
Consider the system to be controlled as in~(\ref{eq:true-system}) and let~(\ref{eq:nominal-system}) be satisfied with a nominal model $\bar{\vect{f}}$ and a compact uncertainty set $\mathbb{W}$. Furthermore, there exist a terminal constraint set $\mathbb{X}_{\text{term}}$ and an associated terminal controller $\pi_{\text{term}}$ according to~\autoref{assump:1}. Let the optimization in~(\ref{eq:mpsf-optimization}) be feasible at time step $k = 0$. Then the optimization in~(\ref{eq:mpsf-optimization}) is feasible for all $k \in \mathbb{N}$. 
\end{theorem}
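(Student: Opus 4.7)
The plan is to exploit the fact that the feasibility of~\eqref{eq:mpsf-optimization} depends only on the constraints~\eqref{eq:mpsf-constraints}, which are identical to those of the underlying robust tube-based MPC~\eqref{eq:nominal-mpc-init}--\eqref{eq:nominal-mpc-terminal}. The novel objective~\eqref{eq:mpsf-objective} only selects a particular feasible solution and does not alter the feasible set. Thus the recursive feasibility argument reduces to the well-established shift-and-append construction for robust tube-based MPC~\cite{mayne_2005, mpc_textbook}, and the theorem essentially follows as a corollary.

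I would proceed by induction on $k$, with the base case given by assumption. For the inductive step, assume feasibility at time $k$ with inputs $\vect{u}^*_{0|k}, \ldots, \vect{u}^*_{H-1|k}$ and tubes $\mathbb{X}^*_{0|k}, \ldots, \mathbb{X}^*_{H|k}$. I would construct a candidate at time $k+1$ by shifting the previous solution and appending an input from the terminal controller:
\begin{align*}
\vect{u}_{i|k+1} &= \vect{u}^*_{i+1|k}, \quad i = 0, \ldots, H-2, \\
\vect{u}_{H-1|k+1} &= \pi_{\text{term}}(\cdot),
\end{align*}
with corresponding tubes $\mathbb{X}_{i|k+1} = \mathbb{X}^*_{i+1|k}$ for $i = 0, \ldots, H-1$ and $\mathbb{X}_{H|k+1} = \boldsymbol{\Phi}(\mathbb{X}^*_{H|k}, \pi_{\text{term}}, \mathbb{W})$. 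I would then verify each constraint in turn. Since $\vect{x}_{k+1} = \bar{\vect{f}}(\vect{x}_k, \vect{u}^*_{0|k}) + \vect{w}_k$ for some $\vect{w}_k \in \mathbb{W}$ and $\vect{x}_k \in \mathbb{X}^*_{0|k}$, we have $\vect{x}_{k+1} \in \boldsymbol{\Phi}(\mathbb{X}^*_{0|k}, \vect{u}^*_{0|k}, \mathbb{W}) \subseteq \mathbb{X}^*_{1|k} = \mathbb{X}_{0|k+1}$, which establishes~\eqref{eq:nominal-mpc-init}. The tube dynamics, state constraints, and input constraints for $i = 0, \ldots, H-2$ carry over directly from the feasibility at time $k$. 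The remaining input constraint at $i = H-1$ and the terminal constraint $\mathbb{X}_{H|k+1} \subseteq \mathbb{X}_{\text{term}}$ then follow from \autoref{assump:1} applied to $\mathbb{X}^*_{H|k} \subseteq \mathbb{X}_{\text{term}}$.

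The main subtlety, and the only step that warrants care, is the tube propagation under the state-dependent terminal controller $\pi_{\text{term}}$: unlike a fixed open-loop input, $\pi_{\text{term}}$ may map each state in the tube to a different input, so $\boldsymbol{\Phi}(\mathbb{X}^*_{H|k}, \pi_{\text{term}}, \mathbb{W})$ must be interpreted as the union of one-step reachable states under this feedback law. \autoref{assump:1} is stated precisely to supply the robust positive control invariance of $\mathbb{X}_{\text{term}}$ under $\pi_{\text{term}}$ and therefore handles this issue without any additional machinery. Everything else is a routine shift argument, and crucially the entire construction is independent of the specific form of the objective, so the multi-step MPSF inherits recursive feasibility for any choice of weights $w(\cdot)$ and any filtering horizon $M \le H$.
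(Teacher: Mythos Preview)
Your proposal is correct and follows essentially the same shift-and-append argument as the paper's proof, with the key observation that the multi-step objective does not affect the feasible set. If anything, your write-up is more explicit about the induction and about the subtlety of propagating the tube under the state-dependent $\pi_{\text{term}}$, which the paper handles tersely via Assumption~\ref{assump:1}.
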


\begin{proof}
% The proof follows immediately from standard MPC arguments as the objective function has no impact on recursive feasibility guarantees \cite{mpc_textbook}.
The optimal input trajectory at time step $k=0$, $\{\vect{u}^*_{0|0}, \vect{u}^*_{1|0}, ..., \vect{u}^*_{H-1|0} \}$, is feasible by assumption. Then, the system will be within the terminal set by time step $H$, since the feasibility of the terminal constraint yields $\mathbb{X}_{H|0} \subset \mathbb{X}_{\text{term}}$. After applying $\vect{u}_{0|0}$, the system evolves to $\mathbb{X}_{1} \supseteq \boldsymbol{\Phi}(\mathbb{X}_{0}, \vect{u}_{0|0}, \mathbb{W})$. Then initializing~(\ref{eq:mpsf-optimization}) at $k = 1$ with $\mathbb{X}_{1}$, we know that the input sequence $\{\vect{u}_{1|0}, ..., \vect{u}_{H-1|0}, \pi_{\text{term}}(\vect{x}_{H | 0}) \}$ is feasible 
% since applying $\vect{u}_{H-1|1} = \pi_{\text{term}}(\vect{x}_{H | 0})$ 
for all $\vect{x}_{H | 0} \in  \mathbb{X}_{H|0} \subset \mathbb{X}_{\text{term}}$. 
% at state $\mathbb{X}_{H|0} = \mathbb{X}_{H-1|1} \in \mathbb{X}_{\text{term}}$. 
This yields $\mathbb{X}_{H|1} \subset \mathbb{X}_{\text{term}}$ by \textref{assump:1}{Assumption}. This can be repeated for all time steps $k \in \mathbb{N}$ and ensures there exists a feasible input trajectory that keeps the system within $\mathbb{X}_{\text{term}}$ at time step $k + H$, and thus within the constraints $\mathbb{X}_{\text{c}}$.
\end{proof}

\subsection{Approximate Black Box Control Policies}
The controller $\pi_{\text{uncert}} (\vect{x}_{k})$ may be a black box or a nonlinear controller such as a reinforcement learning control policy represented by a deep neural network. Thus, the closed-loop behavior of the controller and safety filter must be approximated in \eqref{eq:mpsf-extended-objective}. Although many approximations are possible, we found that simulating the behavior of the uncertified controller using the nominal model was the most effective approach and requires no additional learning or tuning beyond $M$ and $w(\cdot)$.

We approximate the closed-loop behavior of the uncertified controller by simulating the effect of the uncertified controller on the nominal model (see \textref{fig:general_sf_model}{Fig.}). At each time step $k$, the simulated trajectory is calculated for the next $M$ steps as follows:
\begin{subequations}
\begin{align*}
    & \bar{\vect{u}}_{\text{uncert}, j|k} = \pi_{\text{uncert}}(\bar{\vect{x}}_{j|k})\\
    &\bar{\vect{x}}_{j+1|k} = \bar{\vect{f}}(\bar{\vect{x}}_{j|k}, \bar{\vect{u}}_{\text{uncert}, j|k})\,, \forall\; j = 0, ..., M-1,
\end{align*} 
\end{subequations}
where $\bar{\vect{x}}_{j|k}$ represents the $j\text{-th}$ state of the simulated state trajectory calculated at time step~$k$ and, analogously, $\bar{\vect{u}}_{\text{uncert}, j|k}$ is the simulated input trajectory. We set $\bar{\vect{x}}_{0|k} = \vect{x}_k$. Thus, the controller is approximated as $\pi_{\text{uncert}}(\vect{x}_{k+j}) \approx \bar{\vect{u}}_{\text{uncert}, j|k}$. The weight function $w(j)$ can be chosen to reflect the accuracy of this approximation. 

This approach requires no tuning but does necessitate access to the controller for the simulation. Thus, this approach is not suitable for human teleoperation, since the controller cannot be queried, but is appropriate for black box controllers or reinforcement learning controllers, which are difficult to approximate but can be simulated. If the controller is inaccessible, other approximations of the controller can be used. Various other approaches, such as approximating the controller as a linear quadratic regulator~(LQR) or using linear regression to learn a local approximation of the control policy, were tested by the authors of this paper and found to have comparable results to the proposed approximation. However, we restrict the presentation to the simulated input trajectory due to its simplicity, lack of additional tuning parameters, and overall efficacy.

%%%%%%%%%%%%%%%%%%%%%%%%%%%%%%%%%%%%%%%%%%%%%%%%%%%%%%%%%%%%%%%%%%%%%%%%%%%%%%
\section{Safety Filter Performance Metrics}
Previous safety filter research has focused on eliminating constraint violations of the uncertified controller. In addition, this paper focuses on minimizing the undesirable effects of safety filters such as unnecessary corrections, jerkiness, and chattering. We propose several new metrics to measure the interventions of the safety filter and the chattering. 

We consider a matrix of corrections, $\vect{C} \in \RR^{K \times m}$, where $K$ is the total number of time steps in the experiment. Each row of $\vect{C}$, $\vect{c}_k = \vect{R}^{1/2}(\vect{u}_{\text{uncert}, k} - \vect{u}_{\text{cert}, k}) \in \RR^{m}$, is the correction at time step $k=0, ..., K-1$, and $\vect{R}^{1/2}$ is the positive semi-definite square-root matrix of $\vect{R}$. Thus, the safety filter at time step $k$ minimizes the norm of the correction $\|\vect{c}_k\|_2^2$. The cost matrix~$\vect{R}$ in~(\ref{eq:mpsf-objective-one-step}) and~(\ref{eq:mpsf-extended-objective}) weighs the different components of the input. For example, it can be chosen as the identity if the inputs are normalized and of the same size. However, if~$\vect{R}$ is not identity, we account for it in the corrections by weighing them using its square-root matrix.

\subsection{Measuring Interventions} \label{sec:meas_inter}
A safety filter should minimize how significantly and how often it must intervene. We propose to measure this in two ways: the magnitude of the corrections and the number of corrections.

The magnitude of corrections is calculated as the norm of the matrix of corrections, $\|\vect{C}\|_{\text{F}}$, where $\|\cdot\|_{\text{F}}$ is the Frobenius norm. This measures how much the safety filter modified the proposed inputs in an experiment. The Frobenius norm is used as it weighs all elements of a matrix equally, extending the Euclidean norm to matrices.

The number of corrections indicates the number of time steps at which the proposed input was modified above a certain percentage tolerance $\epsilon$. This is calculated as $\sum_{k=0}^{K-1} \vect{1} \left( \frac{\|\vect{c}_k\|_2}{\|\vect{u}_{\text{cert}, k}\|_{\vect{R}}} \ge \epsilon\right)$, where $\vect{1}(\cdot)$ is the indicator function. This equation simply counts the times when the magnitude of the correction divided by the magnitude of the certified input is above the tolerance $\epsilon$. The tolerance will depend on how sensitive the system is to variations in the input.

\subsection{Measuring Chattering} \label{sec:meas_chat}
Safety filters usually increase the rate of change of the inputs through chattering and high-magnitude corrections. To measure a safety filter's impact, we propose two metrics: the maximum correction and the norm of the rate of change of the inputs.

The maximum correction is  $\max_{k\in \{0, \dots, K-1\}} \|\vect{c}_k\|_2$. Higher maximum corrections typically correspond to jerky movement. Ideally, we wish to maintain safety with small-magnitude corrections.

The rate of change of the inputs measures how much the input varied during an experiment, and is significantly increased by chattering and jerky inputs. Consider the applied inputs $\vect{u}_{k}$ for $k = 0, ..., K-1$. First, we take the numerical derivative $\delta \vect{u}_{k} = \vect{R}^{1/2}(\vect{u}_{k} - \vect{u}_{k-1})/\delta t$ for $k = 1, ..., K-1$ (where $\delta t$ is the length of a time step) and stack them into a matrix $\Delta \vect{u} = [\delta \vect{u}_{1}, ..., \delta \vect{u}_{K-1}]$. Then the norm of the rate of change of the inputs is $\|\Delta \vect{u}\|_{\text{F}}$. This metric can also be applied to an uncertified trajectory, enabling comparisons between uncertified and filtered controllers.

%%%%%%%%%%%%%%%%%%%%%%%%%%%%%%%%%%%%%%%%%%%%%%%%%%%%%%%%%%%%%%%%%%%%%%%%%%%%%%
\section{Experimental Results}
To determine the efficacy of the proposed multi-step MPSF, we ran experiments in the safe learning-based control simulation environment  \texttt{safe-control-gym}~\cite{safe-control-gym} and on a real quadrotor, the Crazyflie 2.0 (see \textref{fig:drone_chat}{Fig.} and \textref{fig:traj_real}{Fig.}). The underlying MPC is a robust nonlinear MPC formulation~\cite{nl_mpc}, which assumes time-invariant system dynamics. Additionally, it assumes that the system is incrementally stabilizable and the model mismatch between the nominal system and the real system is bounded by a known bound. The upper bound on the model mismatch was found experimentally by comparing the true states and the states predicted by the nominal model while executing trajectories for system identification, $w_{\text{max}} = \max_{k \in \{0, \dots, K-1\}} \|\vect{x}_{k+1} - \bar{\vect{x}}_{1|k}\|_2$, and thus we define $\mathbb{W} = \{\vect{w}\in\RR^n \mid \|\vect{w}\|_2 \le w_{\text{max}}\}$. 

The experiments test the performance of the standard one-step MPSF compared to our proposed multi-step MPSF with $M=2, 5, 10$. Additionally, we consider the one-step MPSF with regularization \eqref{eq:j-reg} (with $M_{\text{r}} = 10$) to demonstrate our approach is effective even compared to an approach that directly penalizes the rate of change of the inputs along the entire trajectory, which leads to over-correcting that is unsuitable for most safety filtering tasks. For all experiments, the weight functions $w(j)$ and $w_{\text{r}}(j)$ were chosen to be $w(j) = w_{\text{r}}(j) = 0.85^{j}$ and the input channel cost matrices $\vect{R}$, and $\vect{R}_{\text{r}}$ were chosen to be the identity matrix. 

\subsection{Simulation} \label{sec:sim_results}
The MPSFs were evaluated in simulation using the \texttt{safe-control-gym}~\cite{safe-control-gym}. The experiments were done on three different systems and two different tasks: a cartpole, a two-dimensional quadrotor, and a three-dimensional quadrotor, on both stabilization and trajectory tracking tasks with box constraints on the state and input. Because of space constraints,  only the results of the cartpole trajectory tracking experiments are presented here. However, the experiments on the other systems and tasks led to similar conclusions. The remaining results can be found as part of our code release, see \url{https://tinyurl.com/mpsf-results}.

The cartpole system is a cart on a track with a pole hinged to the top of the cart. The state is $\vect{x} = [x, \dot{x}, \theta, \dot{\theta}]\T$, where $x$ is the horizontal position of the cart, $\dot{x}$ is the velocity of the cart, $\theta$ is the angle of the pole with respect to the vertical axis, and $\dot{\theta}$ is the angular velocity of the pole. The input $u\in \RR$ is the force applied to the center of mass of the cart. There is no friction. The nominal equations of motion are~\cite{safe-control-gym}:
\begin{align*}
    & \ddot{x} = \frac{u + m_p l(\dot{\theta}^2\sin\theta - \ddot{\theta}\cos\theta)}{m_c + m_p}\\
    & \ddot{\theta} = \frac{g\sin\theta + \cos\theta\left( \frac{-u - m_p l \dot{\theta}^2\sin{\theta}}{m_c + m_p} \right)}{l\left( \frac{4}{3} - \frac{m_p \cos^2\theta}{m_c + m_p}\right)},
\end{align*}
where $g$ is the gravitational acceleration, $m_c$ and $m_p$ are the masses of the cart and the pole, respectively, and $l$ is half the length of the pole. The true behavior of the cartpole in the simulation is determined not by these equations but by the PyBullet \cite{pybullet} physics engine, which these equations seek to represent. These equations were discretized using a sampling time of $\delta t = \frac{1}{15}\unit{\second}$ to get the nominal discrete-time model of the cartpole, which was found to be very accurate, with $w_{\text{max}}=0.0014$. An accurate nominal model was chosen to demonstrate that MPSFs with $M=1$ still cause chattering even when the model is accurately known. When the model is less accurately known chattering will still occur but the safety filter will be more conservative.  

The trajectory tracking task consists of tracking a sinusoidal reference in the $x$-direction with an amplitude of \SI{1}{\meter} and a period of \SI{5}{\second} while constraining the angle of the pole to be within [\SI{-9.2}{\degree}, \SI{9.2}{\degree}] and the applied force to be within [\SI{-10}{\newton}, \SI{10}{\newton}]. The chosen controllers are a linear-quadratic regulator (LQR), a proximal policy optimization (PPO) \cite{ppo} reinforcement learning controller, and a soft actor-critic (SAC) \cite{sac} reinforcement learning controller. Each MPSF was tested with an MPC constraint horizon of $H=20$ and a controller frequency of 15Hz. To find suitable and interesting starting points, random states from the constrained state space were sampled. If a starting state caused at least five constraint violations when the controllers were uncertified, but could achieve constraint satisfaction in closed-loop with the one-step MPSF, then that starting point was selected. Ten of these starting points were generated for each uncertified controller. Each MPSF was tested using these ten starting states. The starting states are the source of randomness in the results. 

\begin{figure}[tb]
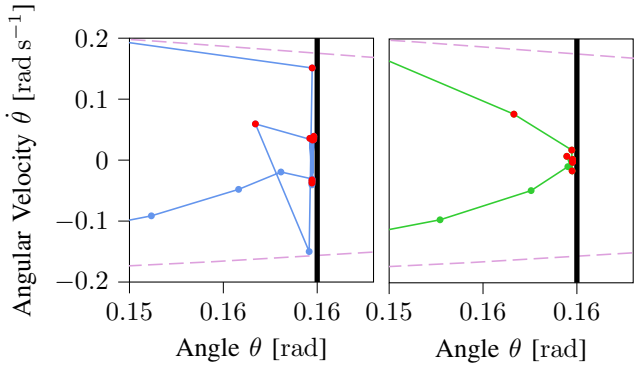

    \centering
    \vspace*{-1mm}\begin{tikzpicture}
    \node (a) at (0,0)
     {
        \input{figs/chattering_example/chatter_1s}
     };
    \node (b) at (a.east) [anchor=west,yshift=-0.3cm,xshift=-0.6cm]
     {
        \input{figs/chattering_example/chatter_pre_2}
     };
%   \draw [<->] (a)--(b);
\end{tikzpicture}
    \vspace*{-6mm}
    \caption{Recorded trajectories of a cartpole trajectory tracking example when approaching a state constraint (in black) with the uncertified trajectory in purple~(dashed), the one-step MPSF in blue (left), and the proposed MPSF with $M=2$ in green (right). States at which the input was modified are indicated by a red dot. The standard one-step approach causes significant chattering, while the proposed approach greatly reduces the amplitude of the chattering.}
    \label{fig:chattering}
    \vspace*{-4mm}
\end{figure}

In the experiments, chattering is significantly reduced by our proposed multi-step MPSF. In \textref{fig:chattering}{Fig.} we see an example of chattering when certified by a one-step MPSF versus when using the proposed MPSF. In cartpole trajectory tracking, the norm of the rate of change of the inputs compared to the one-step MPSF is reduced by up to 73\% (see \textref{fig:roc_sim}{Fig.}), the magnitude of corrections are reduced by up to 25\% (see \textref{fig:magnitude_sim}{Fig.}), and the maximum correction is reduced by up to 52\% (see \textref{fig:max_sim}{Fig.}). This demonstrates that the proposed filter effectively reduces chattering, achieving a similar norm of the rate of change of the inputs compared to the uncertified control inputs (those, however, cause constraint violations), and generally decreases the overall correction effort as well. 

\begin{figure}[tb]
  \centering
  \input{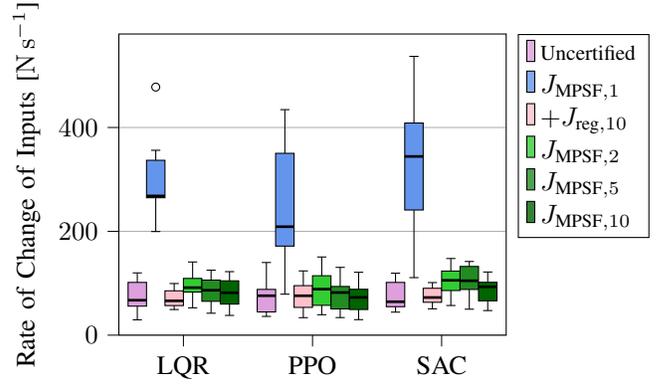}\\
  \vspace*{-1mm}
  \caption{The norm of the rate of change of the inputs (see \textref{sec:meas_chat}{Section}) for the simulated cartpole trajectory tracking experiments for the uncertified controller, the multi-step MPSFs with varying $M$ (denoted $J_{\text{MPSF}, M}$), and a one-step MPSF with regularization (denoted $+ J_{\text{reg}, 10}$). The proposed approach significantly decreases the norm of the rate of change of the inputs, up to a 73\% decrease compared to the one-step approach (in blue), without violating the constraints.}
  \label{fig:roc_sim}
  \vspace*{-3mm}
\end{figure}

\begin{figure}[tb]
  \centering
  \vspace*{2mm}
  \input{figs/simulation/cartpole_track_magnitude_of_corrections}\\
  \vspace*{-1mm}
  \caption{The magnitude of corrections (see \textref{sec:meas_inter}{Section}) for the simulated cartpole trajectory tracking experiments for the multi-step MPSFs with varying $M$ (denoted $J_{\text{MPSF}, M}$), and a one-step MPSF with regularization (denoted $+ J_{\text{reg}, 10}$). The proposed approach decreases the median magnitude of corrections by up to 25\% compared to the one-step approach, demonstrating it has an overall lower correction effort.}
  \label{fig:magnitude_sim}
  \vspace*{-4mm}
\end{figure}

\begin{figure}[tb]
  \centering
  \vspace*{2mm}
  \input{figs/simulation/cartpole_track_max_correction.tex}\\
  \vspace*{-1mm}
  \caption{The maximum correction (see \textref{sec:meas_chat}{Section}) for the simulated cartpole trajectory tracking experiments for the multi-step MPSFs with varying $M$ (denoted $J_{\text{MPSF}, M}$), and a one-step MPSF with regularization (denoted $+ J_{\text{reg}, 10}$). The proposed approach decreases the median maximum correction by up to 52\% compared to the one-step approach, demonstrating it causes less jerky corrections.}
  \label{fig:max_sim}
\end{figure}

\begin{figure}[tb]
  \centering
  \vspace*{2mm}
  \input{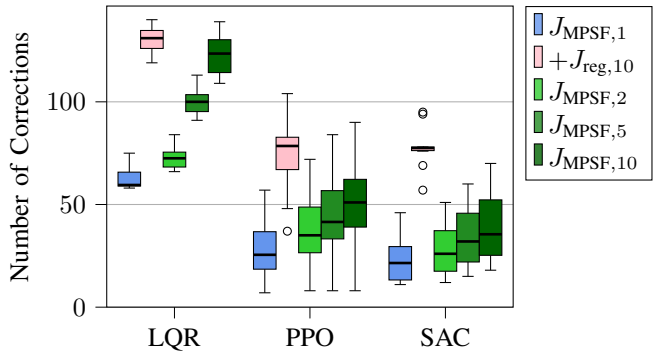}\\
  \vspace*{-1mm}
  \caption{The number of corrections (see \textref{sec:meas_inter}{Section}) for the simulated cartpole trajectory tracking experiments, using a tolerance of $\epsilon=0.1$, for the multi-step MPSFs with varying $M$ (denoted $J_{\text{MPSF}, M}$), and a one-step MPSF with regularization (denoted $+ J_{\text{reg}, 10}$). Both the proposed approach and regularization increase the number of corrections compared to the one-step MPSF (in blue), but the proposed approach decreases the median number of corrections by up to 66\% compared to the one-step MPSF with regularization.}
  \label{fig:num_sim}
  \vspace*{-3mm}
\end{figure}

The proposed filter has a comparable norm of the rate of change of inputs to the one-step MPSF with the additional regularization term but has up to a 22\% lower magnitude of corrections and 29\% lower maximum correction. Additionally, our approach does not correct the system when the constraints are inactive, reducing the number of corrections compared to the regularized approach by up to 66\% (see \textref{fig:num_sim}{Fig.}). Higher $M$ values do increase the number of corrections as the filter is intervening more proactively and early by considering corrections further in the future. However, the MPSF with $M=10$ still has fewer interventions than the regularized approach. When tracking a safe but high-frequency trajectory (see \textref{fig:reg_traj_sim}{Fig.}), the one-step and multi-step MPSFs do not correct the controller as safety is not violated. However, the regularized MPSF significantly corrects the controller, unnecessarily deviating from the desired safe trajectory. This highlights that the additional regularization term is effective close to the safety boundary reducing chattering and high-magnitude corrections; however, it limits performance away from the safety boundary. 

\begin{figure}[!tb]
  \centering
  \input{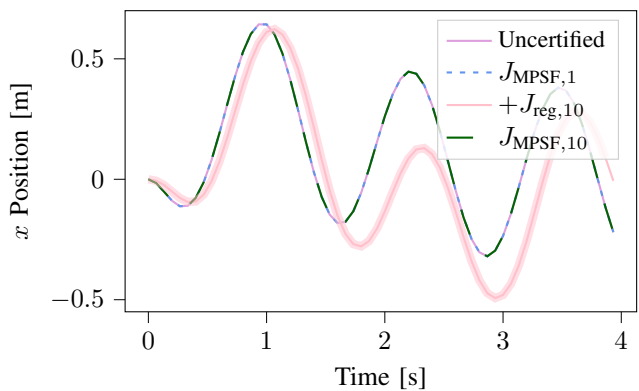}\\
  \vspace*{-1mm}
  \caption{Position trajectories of a simulated cartpole with a safe yet high-frequency sinusoidal reference trajectory when uncertified and when certified by the one-step ($J_{\text{MPSF}, 1}$), one-step with regularization ($+J_{\text{reg}, 10}$), and multi-step MPSFs ($J_{\text{MPSF}, M}$). Time steps where the safety filter executed a correction (with a tolerance $\epsilon=0.1$) are highlighted. Note that the one-step MPSF and the proposed MPSF do not correct the LQR controller as the trajectory is safe; however, the regularized approach does, causing a significant deviation from the desired trajectory.}
  \label{fig:reg_traj_sim}
  \vspace*{-3mm}
\end{figure}

To address the computational aspects of our proposed approach, we observe an increase in the time required to solve the multi-step MPSF problem compared to the one-step MPSF. The time required to solve the MPSF at one time instance increased by approximately 7\%, 22\%, and 60\% when $M=2, 5, 10$, respectively. Adding regularization increased the execution time by 45\%. 

\subsection{Robot Experiments}
The MPSFs were tested on a horizontal trajectory tracking task using a Crazyflie 2.0 quadrotor. The desired trajectory is a sinusoid in $x$ with an amplitude of \SI{1.5}{\meter} and a period of \SI{10}{\second}. The $x$ position of the drone is constrained to be within [\SI{-0.75}{\meter}, \SI{0.75}{\meter}] and the velocity of the drone is constrained to be within [\SI{-0.5}{\meter\per\second}, \SI{0.5}{\meter\per\second}] (see \textref{fig:traj_real}{Fig.}). The input $u \in \RR$ is a position setpoint the quadrotor tracks using an internal cascaded nonlinear controller~\cite{mellinger}. This commanded position setpoint is constrained to be within \SI{0.25}{\meter} of the current position to ensure smooth behavior from the internal position controller. The closed-loop dynamics in the $x$-direction can be approximated as a linear system. The linear system is identified by applying a sinusoidal reference trajectory with varying frequencies and amplitudes. We consider the state $\vect{x} = [x_1, x_2]\T \in \RR^2$, where $x_1$ is the quadrotor’s position and $x_2$ is its velocity in the $x$-direction. Using MATLAB’s System Identification Toolbox, we identified the system at 25\,Hz, as
\begin{equation*}
\vect{x}_{k+1} = \begin{bmatrix} 0.9756 & 0.0287 \\
-0.2793 & 0.8535\end{bmatrix} \vect{x}_k + \begin{bmatrix} 0.0231 \\  0.2854 \end{bmatrix} u_k + w_k, 
\end{equation*}
which we use as the nominal model in the MPSF. The model mismatch was found to be significantly higher than for the simulation experiments, with $w_{\text{max}} = 0.0449$. The chosen controller is an LQR that causes violations in the position and velocity constraints. A motion capture system provides the quadrotor position. 

The one-step, one-step with regularization (with $M_{\text{r}} = 10$), and our proposed multi-step (with $M=2, 5, 10$) MPSFs are tested five times each with an MPC horizon of $H=10$. Every test starts at the same initial position, the origin (and a static height of \SI{1}{\meter}). As seen in \textref{tab:real_experiments}{Table}, in the experiments, our proposed approach significantly reduces the norm of the rate of change of the inputs, reducing it by 80\% compared to the one-step approach when $M=10$. The maximum correction and magnitude of corrections are either maintained or decreased, and both are decreased by over 30\% compared to the one-step approach when $M=10$. We see that the one-step with regularization is outperformed in every metric by the proposed approach with $M=5$ and $M=10$, including the norm of the rate of change of the inputs. This demonstrates that our proposed approach is more effective at decreasing chattering. This is achieved by considering (or predicting) chattering in the future as opposed to directly penalizing the rate of change of the inputs, while still decreasing the maximum correction and the magnitude of corrections. 

\begin{figure}[tb]
  \centering
  \vspace*{2mm}
  \input{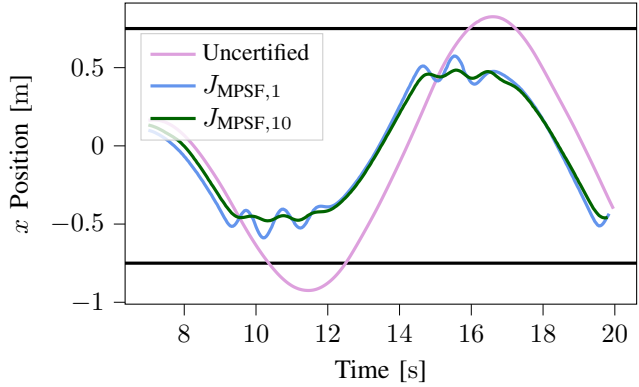}\\
  \vspace*{-1mm}
  \caption{Position trajectories of the real Crazyflie 2.0 experiment when uncertified, certified by the one-step MPSF ($J_{\text{MPSF}, 1}$), and certified by the proposed multi-step MPSF with $M=10$ ($J_{\text{MPSF}, 10}$). 
  % Time steps where the safety filter executed a correction (with a tolerance $\epsilon=0.1$) are highlighted. 
  The proposed filter reduces the peak-to-peak amplitude of chattering near the constraint boundaries~(thick black lines), reducing it from an average of \SI{16.3}{\centi\meter} to \SI{3.6}{\centi\meter}. The certified controllers do not approach the constraint boundaries closely due to the large model uncertainty and strict velocity constraints.}
  \label{fig:traj_real}
  \vspace*{-2mm}
\end{figure}

\begin{table*}[t]
	\centering
	\vspace*{1ex}
	\captionof{table}{Experimental results summary for 5 trials certifying an LQR controller flying a Crazyflie 2.0 on a sinusoidal path. The number of corrections uses a tolerance of $\epsilon=0.1$ and is out of a total of 497 time steps.}
	\begin{tabular}{lrrrrrr}
		\toprule
		Metric & Uncertified & $J_{\text{MPSF}, 1}$ & $+J_{\text{reg}, 10}$ & $J_{\text{MPSF}, 2}$ & $J_{\text{MPSF}, 5}$ & $J_{\text{MPSF}, 10}$ \\
		\midrule
		Mean norm of the rate of change of the inputs [\si{\meter\per\second}] & 2.59 & 30.26 & 15.86 & 23.21 & 13.69 & \textbf{6.18}\\
		Mean magnitude of corrections [\si{\meter}] & - & 2.96 & 2.67 & 2.89 & 2.51 & \textbf{2.03} \\
		Mean maximum correction [\si{\meter}] & - & 0.46 & 0.44 & 0.45 & 0.42 & \textbf{0.30} \\
		Mean Number of Corrections & - & \textbf{76.00} & 299.40 & 81.00 & 175.40 & 194.20 \\
		\bottomrule
	\end{tabular}
	\label{tab:real_experiments}
	\vspace*{-1mm}
\end{table*}

%%%%%%%%%%%%%%%%%%%%%%%%%%%%%%%%%%%%%%%%%%%%%%%%%%%%%%%%%%%%%%%%%%%%%%%%%%%%%%
\section{Conclusion}
This paper proposes a modified objective function for model predictive safety filters (MPSFs) that reduces chattering while not affecting performance when away from the safety boundaries. The paper also introduces several metrics to measure the performance of safety filters. The proposed approach minimizes the input corrections of the safety filter over a multi-step horizon rather than a single time step. The future behavior of the controller is approximated by simulating its input trajectory leveraging a known nominal dynamics model. The presented approach requires little tuning and no additional assumptions compared to previous MPSFs. In simulation and real experiments, we found that the proposed multi-step MPSF greatly reduces chattering compared to the standard MPSF formulation, expanding the applicability of MPSFs to real-world systems.

%%%%%%%%%%%%%%%%%%%%%%%%%%%%%%%%%%%%%%%%%%%%%%%%%%%%%%%%%%%%%%%%%%%%%%%%%%%%%%
\balance
\bibliographystyle{IEEEtran} 
\bibliography{bibliography}
\balance
\end{document}